\documentclass[11pt]{article}

\usepackage[margin=1in]{geometry}
\usepackage{amsmath}
\usepackage{amssymb}
\usepackage{amsthm}
\usepackage{booktabs}
\usepackage{subcaption}
\usepackage{graphicx}
\usepackage{xcolor}
\usepackage{natbib}
\usepackage[colorlinks=true,citecolor=blue,linkcolor=blue,urlcolor=blue]{hyperref}

\newcommand{\nuval}{0.01}
\newcommand{\dtval}{2\times 10^{-3}}
\newcommand{\timescheme}{an integrating-factor scheme that treats diffusion
  exactly in Fourier space with an RK2 step for the nonlinear term}
\newcommand{\gridres}{256}
\newcommand{\lr}{10^{-3}}
\newcommand{\lrschedule}{constant (no schedule)}
\newcommand{\wdecay}{0}
\newcommand{\batchsize}{32}
\newcommand{\ntrain}{256}
\newcommand{\ntest}{64}
\newcommand{\nseeds}{5}
\newcommand{\sweepdecades}{2.5}
\newcommand{\alphaval}{0.19}
\newcommand{\alphaCI}{[0.10,\, 0.45]}
\newcommand{\rsq}{0.78}
\newcommand{\Cval}{5.5\times 10^{-6}}

\theoremstyle{plain}
\newtheorem{theorem}{Theorem}

\theoremstyle{definition}

\theoremstyle{remark}
\newtheorem{remark}[theorem]{Remark}

\begin{document}

\title{Quantitative Sobolev Approximation Bounds for Neural
  Operators with Empirical Validation on Burgers' Equation}

\author{Nicole Hao\\
  Department of Mathematics\\
  Cornell University\\
  Ithaca, NY 14853, USA\\
  \texttt{yh397@cornell.edu}}

\date{\today}

\maketitle

\begin{abstract}%
Neural operators have emerged as a powerful tool for learning mappings between infinite-dimensional function spaces. Yet their approximation properties in Sobolev norms, the natural metrics for PDE well-posedness and
stability, remain poorly quantified. We develop a functional-analytic framework for operator learning in Sobolev spaces and connect it to the numerical behavior of Fourier Neural Operators (FNOs) on a prototypical PDE.
First, for a Lipschitz nonlinear operator $\mathcal{G}: H^{s}(D)\to H^{s'}(D')$ with $s > d/2$ and $s' > t + d'/2$, and inputs restricted to a compact, uniformly $H^{s}$-bounded set, we prove that $\mathcal{G}$ can be uniformly approximated in $H^{t}$-norm by a neural
operator with $\mathcal{O}(\varepsilon^{-d/s})$ trainable parameters, yielding an explicit complexity--error relation of the form $\|\mathcal{G}-\mathcal{G}_\theta\|_{H^{t}} \lesssim C N^{-s/d}$. We then ask how closely this a priori rate describes the behavior of neural operators trained in practice. Studying the one-dimensional viscous Burgers' solution
operator $\mathcal{G}: u_{0}\mapsto u(\cdot,1)$ on a bounded $H^{1}$-ball, we train FNOs with an $H^{1}$-loss across a sweep of model sizes. The learned operators recover both solutions and their spatial derivatives on held-out data, with the best models reaching test $H^{1}$-error of order $10^{-7}$ (down to $10^{-9}$ in long runs). Empirically, the Sobolev error follows an
approximate power law $\|\mathcal{G}-\mathcal{G}_\theta\|_{H^{1}} \approx C N^{-\alpha}$ with
$\alpha \approx 0.19$, well below the benchmark rate $s/d = 1$ implied by our a priori bound. Sobolev theory thus captures the qualitative shape of neural-operator scaling, but achieved accuracy is governed primarily by
optimization and architectural bias rather than parameter count. Beyond a modest threshold, increasing parameter count produces diminishing accuracy gains, and optimization stability becomes the dominant constraint on
performance.
\end{abstract}

\medskip
\noindent\textbf{Keywords:} neural operators, operator learning, Sobolev spaces, approximation theory, Fourier neural operators, Burgers' equation, scientific machine learning

\section{Introduction}
Neural operators are powerful but poorly understood from a complexity standpoint. In scientific machine learning, they are increasingly used as resolution-invariant surrogates for PDE solvers \citep{li2020fourier}, yet
their approximation and complexity properties are subtle. For general Lipschitz or $C^r$ operators, operator learning provably suffers from a
\emph{curse of parametric complexity}, requiring a parameter count that grows exponentially in the inverse of the target accuracy
\citep{lanthaler2026parametric, kovachki2024data}. Even so, practitioners still choose model sizes heuristically, with little guidance on how Sobolev-norm error scales with parameters. We bridge that gap by proving Sobolev-space approximation bounds---under compactness and regularity assumptions that place us outside the general worst-case regime where the
curse bites---and validating them on a canonical PDE, with an eye toward principled model sizing and accuracy control in operator-learning pipelines.

Operator learning refers to the task of approximating mappings between infinite-dimensional function spaces, such as those arising from solutions of partial differential equations parameterized by initial conditions or coefficients. Formally, we seek to learn an operator
\[
\mathcal{G}: \mathcal{X} \to \mathcal{Y},
\]
where $\mathcal{X}$ and $\mathcal{Y}$ are typically Banach or Hilbert spaces of functions, such as Sobolev spaces.

This learning paradigm has gained significant attention in the scientific machine learning community because it can bypass traditional numerical solvers and directly model complex solution operators from data. From the
perspective of functional analysis, learning an operator between Sobolev spaces $\mathcal{X} = H^s(D)$ and $\mathcal{Y} = H^t(D')$ raises important questions about approximation theory, continuity, and generalization in
high-dimensional regimes. Although neural operators like DeepONet \citep{lu2021learning} show empirical success, their approximation properties in Sobolev norms, which control both function values and derivatives, remain
relatively underdeveloped. Clarifying these properties is directly relevant for ML practice, providing a principled link between architecture size, regularity of the underlying PDE, and the accuracy of learned surrogates used in downstream simulation, control, and design tasks.

This paper aims to connect studies in functional analysis, specifically Sobolev spaces, with operator learning in scientific machine learning. Our contributions are:

\begin{enumerate}
    \item \textbf{Establishing approximation bounds} for deep neural networks
    mapping between Sobolev spaces $\mathcal{X} = H^s(D) \to \mathcal{Y} =
    H^t(D')$, with explicit dependence on domain dimension and smoothness, as
    a continuation and refinement of universal approximation results for
    neural networks.
    \item \textbf{Validating the theory numerically} in a PDE
    solution-operator learning setting, showing how the Sobolev-based bounds
    manifest in concrete scaling laws for Fourier Neural Operators (see
    Section~\ref{sec:discussion} for a link to the code and ongoing numerical
    updates in the associated repository).
\end{enumerate}

These goals have direct implications for scientific machine learning. Many physical systems, especially those governed by PDEs, are naturally described
by mappings between functions. For example, solving a PDE often amounts to computing a solution $u(x,t)$ given an initial or boundary condition $u_0(x)$. Such mappings are not pointwise but involve entire functions as
both input and output, making them \emph{operators}. Without approximation bounds, we have no rigorous guarantee that a neural network can learn the target operator to a prescribed Sobolev accuracy, nor a quantitative sense of how model size must grow to achieve a given error tolerance.

To rigorously model such mappings, we specifically consider \textbf{Sobolev spaces}. Sobolev spaces naturally incorporate weak derivatives, making them
well-suited for PDE solutions that may fail to be classically differentiable (such as shocks). Spaces $H^s(D)$ are Hilbert spaces, providing inner products, orthonormal bases, and projection theorems that we exploit in our construction. Finally, by the Rellich--Kondrachov compactness theorem, bounded subsets of $H^s$ embed compactly into lower-order Sobolev or continuous spaces, making it mathematically possible to approximate
infinite-dimensional mappings using finite-dimensional neural networks. We dedicate an entire section to the Rellich--Kondrachov theorem and explain how it underpins our Sobolev approximation bounds for neural operators.

\section{Sobolev Spaces and Operator Learning}
This section fixes notation, states the operator-learning problem in Sobolev spaces, and recalls the compactness result that underpins our approximation
arguments.

\subsection{Problem Setup}
Let $D \subset \mathbb{R}^d$ be a bounded Lipschitz domain
(Appendix~\ref{def:lipschitz}). For $s \in \mathbb{N}$, the Sobolev space $H^s(D)$ (Appendix~\ref{def:sobolev}) consists of square-integrable functions (Appendix~\ref{def:l2}) with weak derivatives up to order $s$ also
square-integrable:
\[
H^s(D) := \left\{ f \in L^2(D) \ \middle| \ \partial^\alpha f \in L^2(D),
\ \forall |\alpha| \leq s \right\},
\]
with the norm
\[
\|f\|_{H^s(D)} := \left( \sum_{|\alpha| \leq s} \int_D
|\partial^\alpha f(x)|^2 \, dx \right)^{1/2}.
\]
Let $\mathcal{X} = H^s(D)$ and $\mathcal{Y} = H^t(D')$, where
$D' \subset \mathbb{R}^{d'}$ is another bounded Lipschitz domain and $t \in \mathbb{N}$. Given a nonlinear operator $\mathcal{G}: \mathcal{X} \to \mathcal{Y}$, the objective is to approximate $\mathcal{G}$ using a deep neural network $\mathcal{G}_\theta$, where $\theta$ represents the parameters, such that the approximation is uniformly accurate on a compact set $\mathcal{K} \subset \mathcal{X}$:
\[
\sup_{f \in \mathcal{K}} \| \mathcal{G}(f) - \mathcal{G}_\theta(f)
\|_{H^t(D')} < \varepsilon.
\]
This is a classic formulation of operator learning in function spaces, where the goal is to learn a map between infinite-dimensional spaces with controlled approximation error. In this paper we use finite-dimensional neural networks.

\subsection{Compactness via the Rellich--Kondrachov Theorem}
As noted above, a central reason for working in Sobolev spaces is that the finite-dimensional approximability of $\mathcal{G}$ hinges on the compactness of Sobolev embeddings.

\begin{theorem}[Rellich--Kondrachov]
\label{thm:rellich}
Let $D \subset \mathbb{R}^d$ be a bounded Lipschitz domain. If
$s > t + d/2$, then the embedding $H^s(D) \hookrightarrow H^t(D)$ is compact.
\end{theorem}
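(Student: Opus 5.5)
Since $s > t + d/2 > t$, it suffices to prove the stronger statement that $H^s(D) \hookrightarrow H^t(D)$ is compact whenever $s > t$ (with $s,t\in\mathbb{N}$, so $s \ge t+1$). The extra margin $d/2$ is not needed for compactness in $H^t$; it only becomes relevant if one also wants compactness into $C^t(\overline{D})$ via Morrey/Sobolev embedding.

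\textbf{Step 1: reduction to a fixed domain.} Because $D$ is bounded and Lipschitz, it admits a bounded linear Stein extension operator $E: H^k(D)\to H^k(\mathbb{R}^d)$, simultaneously for all $k\le s$. Fix a ball $B\supset \overline{D}$ and a cutoff $\chi\in C_c^\infty(B)$ with $\chi\equiv1$ on $D$. Then $f\mapsto \chi Ef$ is bounded from $H^k(D)$ into $H^k_0(B)$, and restriction back to $D$ is bounded. Factoring the embedding as
\[
H^s(D)\to H^s_0(B)\hookrightarrow H^t(B)\to H^t(D),
\]
it suffices to show that the middle map is compact. We may further place $B$ inside a torus $\mathbb{T}^d_L$ of large period and work with Fourier series.

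\textbf{Step 2: compactness via Fourier truncation.} On $\mathbb{T}^d_L$ the norm $\|g\|_{H^k}^2$ is equivalent to $\sum_{\xi}(1+|\xi|^2)^k|\hat g(\xi)|^2$. Let $P_M$ denote projection onto the modes with $|\xi|\le M$. Then
\[
\|g-P_Mg\|_{H^t}^2\le (1+M^2)^{-(s-t)}\|g\|_{H^s}^2 .
\]
Hence the inclusion $H^s\hookrightarrow H^t$ is the operator-norm limit of the finite-rank operators $P_M$. A norm limit of finite-rank operators is compact, so the inclusion is compact.

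\textbf{Step 3: direct total-boundedness argument.} Alternatively, given a bounded sequence in $H^s(D)$, extract a weakly convergent subsequence, which is possible since $H^s$ is Hilbert. The Step 2 tail estimate, combined with weak convergence of each of the finitely many remaining Fourier coefficients, then yields strong convergence in $H^t$.

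\textbf{Main obstacle.} The main obstacle is Step 1, namely the existence of the extension operator for a merely Lipschitz boundary. This is Stein's theorem and I would cite it rather than prove it. If one preferred to avoid it, one could use the classical Rellich theorem for $H^1\hookrightarrow L^2$ on Lipschitz domains, proved via a Kolmogorov--Riesz translation estimate. Applying it to each derivative $\partial^\alpha f$ with $|\alpha|\le t$, which is bounded in $H^1$ because $s\ge t+1$, gives compactness of $H^{t+1}\hookrightarrow H^t$. Composing with the bounded inclusion $H^s\hookrightarrow H^{t+1}$ then finishes the proof.
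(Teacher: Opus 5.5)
Your proof is correct, and it takes a genuinely different and more complete route than the paper's sketch.

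The paper's sketch extracts a weakly convergent subsequence in $H^s$ via Banach--Alaoglu. It then attributes strong convergence in $H^t(D)$ to the Rellich--Kondrachov theorem itself. So the weak-to-strong upgrade, which is the whole content of the statement, is cited rather than proved, and the hypothesis $s>t+d/2$ never enters.

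Your Step~3 has the same skeleton but actually supplies that upgrade. After reducing to a torus with a Stein extension and a cutoff, the uniform tail bound $\|g-P_Mg\|_{H^t}\lesssim(1+M^2)^{-(s-t)/2}\|g\|_{H^s}$ controls the high modes uniformly along the sequence, and weak convergence handles the finitely many low modes. Step~2 repackages this as the inclusion being an operator-norm limit of finite-rank maps.

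What your route buys over the paper:
\begin{itemize}
\item It is self-contained modulo Stein's extension theorem. In your fallback, it is self-contained modulo the classical $H^1\hookrightarrow L^2$ Rellich theorem applied to $\partial^\alpha f$, $|\alpha|\le t$.
\item It shows the $d/2$ margin is superfluous for compactness into $H^t$; that margin is the condition for embedding into $C^t(\overline{D})$. So the theorem as stated is weaker than the standard one. It also means the phrase ``precisely because $s'>t+d'/2$'' in Step~3 of the proof of Theorem~\ref{thm:quantitative} overstates what is needed.
\item It gives an explicit rate: the error is of order $n^{-(s-t)/d}$ with $n\sim M^d$ retained modes. This is the same tail mechanism the paper itself relies on in Steps~1 and~3 of that proof.
\end{itemize}

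The Fourier route also carries over to fractional orders such as the $s'=1+\delta$ used in Section~\ref{sec:hypotheses}, given an extension operator on those spaces (e.g.\ by interpolating Stein's). Your derivative-by-derivative fallback, by contrast, needs $s\ge t+1$.

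One cosmetic point: your tail inequality holds with constant $1$ only if $H^k(\mathbb{T}^d_L)$ is normed by the Fourier weights. With the classical norm it holds up to a constant depending on $k$, $d$, $L$, which does not affect the conclusion.
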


\begin{proof}[Proof Sketch]
Let $\{f_n\} \subset H^s(D)$ be bounded. By the Banach--Alaoglu theorem, it has a weakly convergent subsequence in $H^s$. The Rellich--Kondrachov theorem
guarantees strong convergence in $H^t(D)$, hence precompactness. We use this result in the two proofs of Sections~\ref{section4} and~\ref{sec:quantitative-bound}.
\end{proof}

\noindent
This compactness implies that for any compact $\mathcal{K} \subset H^s$, the image under $\mathcal{G}$ can be uniformly approximated in $H^t$ by finite-dimensional projections, a key step in neural operator approximation.

\section{Reformulation of Universal Approximation in Sobolev Norms}
\label{section4}
We now formalize a universal approximation result for operator learning in the Sobolev setting, beginning with a formal approximation statement. Instead of a grid-based proof in the style of \citet[Theorem
3.10]{le2024mathematicalanalysisneuraloperator}, we reformulate the argument using functional projection and basis expansion.

\begin{theorem}[Universal approximation in Sobolev norms]
\label{thm:universal}
Let $\mathcal{G}: H^s(D) \to H^t(D')$ be a continuous nonlinear operator, and let $\mathcal{K} \subset H^s(D)$ be compact. Then for any $\varepsilon > 0$ there exists a ReLU neural network $\mathcal{G}_\theta$ such that
\[
\sup_{f \in \mathcal{K}} \| \mathcal{G}(f) - \mathcal{G}_\theta(f)
\|_{H^t(D')} < \varepsilon.
\]
\end{theorem}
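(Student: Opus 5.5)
Our plan is the standard encoder--approximator--decoder argument, phrased through projections onto orthonormal bases as announced above. We need a finite-dimensional encoding of the input, a finite-dimensional reconstruction of the output, and a ReLU network between the coefficient spaces. The error budget $\varepsilon$ is split into three pieces of size $\varepsilon/3$. Continuity of $\mathcal{G}$ on the compact set $\mathcal{K}$ gives uniform continuity there, and it makes $\mathcal{G}(\mathcal{K})$ compact in $H^t(D')$; these two facts drive everything.

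\emph{Output side.} Fix an orthonormal basis $\{\psi_k\}$ of the separable Hilbert space $H^t(D')$, and let $Q_M$ denote orthogonal projection onto $\mathrm{span}\{\psi_1,\dots,\psi_M\}$. The maps $I-Q_M$ converge to $0$ strongly and have norm at most one. They are therefore equicontinuous, so the strong convergence is uniform on the compact set $\mathcal{G}(\mathcal{K})$. Hence we can pick $M$ with $\sup_{f\in\mathcal{K}}\|\mathcal{G}(f)-Q_M\mathcal{G}(f)\|_{H^t}<\varepsilon/3$.

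\emph{Input side.} Do the same with an orthonormal basis $\{\phi_j\}$ of $H^s(D)$ and projections $P_N$. Uniform continuity of $\mathcal{G}$ is needed on a compact set containing both $\mathcal{K}$ and its projections. We take
\[
\widetilde{\mathcal{K}}=\mathcal{K}\cup\bigcup_N P_N\mathcal{K}.
\]
Let $\delta>0$ be the modulus of uniform continuity of $\mathcal{G}$ on $\widetilde{\mathcal{K}}$ at tolerance $\varepsilon/3$. Uniform convergence of $P_N\to I$ on $\mathcal{K}$ gives an $N$ with $\|f-P_Nf\|_{H^s}<\delta$ for all $f\in\mathcal{K}$. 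Since $Q_M$ is $1$-Lipschitz, this yields
\[
\|Q_M\mathcal{G}(f)-Q_M\mathcal{G}(P_Nf)\|_{H^t}<\varepsilon/3 .
\]

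\emph{Finite-dimensional map.} Define the encoder $E(f)=(\langle f,\phi_j\rangle)_{j\le N}\in\mathbb{R}^N$. Its image $E(\mathcal{K})$ is a compact set $\mathcal{C}\subset\mathbb{R}^N$. The map
\[
F(c)=\bigl(\langle \mathcal{G}(\textstyle\sum_j c_j\phi_j),\psi_k\rangle_{H^t}\bigr)_{k\le M}
\]
is continuous from $\mathcal{C}$ to $\mathbb{R}^M$. By the classical universal approximation theorem for ReLU networks on compact subsets of $\mathbb{R}^N$, there is a network $\Phi_\theta$ with $\sup_{c\in\mathcal{C}}|F(c)-\Phi_\theta(c)|_{\ell^2}<\varepsilon/3$. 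Set
\[
\mathcal{G}_\theta(f)=\sum_{k\le M}\Phi_\theta(E f)_k\,\psi_k .
\]
Orthonormality of the $\psi_k$ makes the $H^t$ error equal to the $\ell^2$ coefficient error. The triangle inequality over the three pieces then gives the claim. Here the linear encoder and the fixed decoder are regarded as the first and last (linear) layers of the neural operator.

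\emph{Main obstacle.} The delicate step is the input side, specifically showing that $\widetilde{\mathcal{K}}$ is compact. We argue as follows. If $f_n\in\mathcal{K}$ and $N_n\to\infty$, then
\[
\|P_{N_n}f_n-f_n\|\le\sup_{g\in\mathcal{K}}\|P_{N_n}g-g\|\to0 ,
\]
so any sequence in $\widetilde{\mathcal{K}}$ either stays within finitely many compact sets $P_N\mathcal{K}$ or shadows $\mathcal{K}$. In either case it has a convergent subsequence. One could instead invoke Theorem~\ref{thm:rellich} to obtain compactness from a bounded set in a higher-order space, but the hypothesis already gives compactness, so that step is unnecessary here.
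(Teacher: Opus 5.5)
Your proposal is correct and follows the same encode--process--decode argument as the paper: project onto an orthonormal basis of $H^s(D)$, pass to the coefficient map on the compact set $\mathbf{c}_N(\mathcal{K})\subset\mathbb{R}^N$, approximate it by a ReLU network, synthesize with a basis of $H^t(D')$, and close with the triangle inequality. Your version is more careful than the paper's in two places. The paper simply asserts that $\mathcal{G}(P_N f)\to\mathcal{G}(f)$ uniformly on $\mathcal{K}$ ``by continuity'' and leaves the choice of the output truncation level $M$ implicit, whereas you justify the former through compactness of $\mathcal{K}\cup\bigcup_N P_N\mathcal{K}$ and make the $Q_M$ error an explicit third term.
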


\begin{proof}
We construct the approximation in three steps.

Let $\{ \phi_k \}_{k=1}^\infty \subset H^s(D)$ be an orthonormal basis, and define a projection operator $P_N$ by
\[
P_N (f) = \sum_{k=1}^N \langle f, \phi_k \rangle \phi_k.
\]
Since $\mathcal{K} \subset H^s(D)$ is compact and $P_N f \to f$ in $H^s$, we also have, by continuity of $\mathcal{G}$, that $\mathcal{G}(P_N(f)) \to \mathcal{G}(f)$ in $H^t$ uniformly on $\mathcal{K}$.

To reduce to finite-dimensional learning, let
\[
\mathbf{c}_N(f) = (\langle f, \phi_1 \rangle, \dots,
\langle f, \phi_N \rangle) \in \mathbb{R}^N.
\]
Then $\mathcal{G} \circ P_N$ can be viewed as a map
$\mathbb{R}^N \to H^t(D')$. Let $\{\psi_j\}$ be a basis for $H^t(D')$, and define
\[
\mathcal{G}_N(f) := \sum_{j=1}^M g_j(\mathbf{c}_N(f)) \psi_j
\]
for suitable continuous functions $g_j$.

Since $\mathbf{c}_N(\mathcal{K}) \subset \mathbb{R}^N$ is compact and the $g_j$ are continuous, we may approximate each $g_j$ uniformly on this set by a ReLU network $g_{j,\theta}$, using the universal approximation theorem in finite dimensions. Define
\begin{equation}
    \mathcal{G}_\theta(f) := \sum_{j=1}^M g_{j,\theta}(\mathbf{c}_N(f)) \psi_j.
\end{equation}
Then for $f \in \mathcal{K}$,
\[
\| \mathcal{G}(f) - \mathcal{G}_\theta(f) \|_{H^t}
\leq \| \mathcal{G}(f) - \mathcal{G}(P_N f) \|_{H^t}
+ \| \mathcal{G}(P_N f) - \mathcal{G}_\theta(f) \|_{H^t}.
\]
Choosing $N$ large enough that the first term is less than $\varepsilon/2$, and approximating the $g_j$ well enough to make the second term $< \varepsilon/2$, the total error is less than $\varepsilon$, uniformly on $\mathcal{K}$.
\end{proof}

\noindent
In this section we combined the compact Sobolev embedding
$H^s \hookrightarrow H^t$ (via Theorem~\ref{thm:rellich}) with the universal approximation property of neural networks in finite-dimensional spaces. More
intuitively, we used projection onto finite-dimensional bases to reduce the infinite-dimensional operator-learning problem to a standard function-approximation task that can be implemented by neural networks.

\section{Quantitative Approximation Error Bounds for Operator Learning}
\label{sec:quantitative-bound}
Building on the universal approximation result in Sobolev norms, we now derive an explicit bound on the number of trainable parameters required to approximate a Lipschitz operator between Sobolev spaces. Relative to
Theorem~\ref{thm:universal}, we strengthen the hypothesis from continuity to Lipschitz continuity. This is precisely what upgrades a qualitative approximation statement into a quantitative parameter count, and it is the assumption that lets us bound the cost of the finite-dimensional coordinate maps independently of the target accuracy.

\begin{theorem}[Quantitative complexity bound]
\label{thm:quantitative}
Let $D \subset \mathbb{R}^{d}$ and $D' \subset \mathbb{R}^{d'}$ be bounded
Lipschitz domains, and let
\[
  \mathcal{G} : H^{s}(D) \longrightarrow H^{s'}(D')
\]
be a Lipschitz continuous operator, with $s > d/2$ and $s' > t + d'/2$ for the
target smoothness $t \ge 0$, and suppose $d/s \ge d'/(s'-t)$, so that the input projection dimension dominates. Let $\mathcal{K} \subset H^{s}(D)$ be compact and contained in a bounded $H^{s}$-ball, and suppose $\mathcal{G}$ is Lipschitz with respect to the $L^2(D)$ norm on $\mathcal{K}$. Then for every $\varepsilon > 0$ there exists a neural network operator $\mathcal{G}_\theta$ of the encode--process--decode form
\[
  \mathcal{G}_\theta(f)
    = \sum_{j=1}^{M} g_{j,\theta}\big(\mathbf{c}_N(f)\big)\,\psi_j,
  \qquad
  \mathbf{c}_N(f) = \big(\langle f,\phi_1\rangle,\dots,\langle f,\phi_N\rangle\big),
\]
with
\[
  N = \mathcal{O}\!\big(\varepsilon^{-d/s}\big)
  \qquad\text{and}\qquad
  M = \mathcal{O}\!\big(\varepsilon^{-d'/(s'-t)}\big),
\]
whose total number of trainable parameters is
$\mathcal{O}\!\big(\varepsilon^{-d/s}\big)$, such that
\[
  \sup_{f \in \mathcal{K}}
    \big\| \mathcal{G}(f) - \mathcal{G}_\theta(f) \big\|_{H^{t}(D')}
    < \varepsilon .
\]
\end{theorem}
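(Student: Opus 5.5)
We would follow the three-step architecture of the proof of Theorem~\ref{thm:universal} and make each step quantitative. The error splits into three terms: \emph{input projection}, \emph{output truncation}, and \emph{coordinate-network approximation}. Each gets budget $\varepsilon/3$. Concretely, for $f\in\mathcal{K}$ we write
\[
\|\mathcal{G}(f)-\mathcal{G}_\theta(f)\|_{H^t}
\le \|\mathcal{G}(f)-\mathcal{G}(P_Nf)\|_{H^t}
+\|(I-Q_M)\mathcal{G}(P_Nf)\|_{H^t}
+\|Q_M\mathcal{G}(P_Nf)-\mathcal{G}_\theta(f)\|_{H^t},
\]
where $P_N$ projects onto the first $N$ elements of an $L^2(D)$-orthonormal basis $\{\phi_k\}$ adapted to $H^s$ (e.g.\ Laplacian eigenfunctions, or a wavelet/spline basis on the Lipschitz domain), and $Q_M$ is the analogous projection onto $\{\psi_j\}_{j\le M}$ in $D'$.

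\textbf{Term one.} Because $\mathcal{G}$ is $L^2$-Lipschitz on $\mathcal{K}$, with constant $L$, this term is at most $L\|f-P_Nf\|_{L^2}$. The standard Jackson-type estimate for $H^s$ functions on a $d$-dimensional domain (eigenvalue growth $\lambda_k\sim k^{2/d}$, Weyl's law) gives
\[
\|f-P_Nf\|_{L^2}\lesssim N^{-s/d}\|f\|_{H^s}.
\]
Since $\mathcal{K}$ lies in a bounded $H^s$-ball of radius $R$, term one is at most $\lesssim LRN^{-s/d}$, so $N=\mathcal{O}(\varepsilon^{-d/s})$ suffices. One technicality: $P_Nf$ must stay in the region where the Lipschitz bound holds. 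We would handle this by assuming, or arranging via an $H^s$-stable projection, that the Lipschitz estimate extends to a neighbourhood containing $P_N\mathcal{K}$.

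\textbf{Term two.} We use that $\mathcal{G}$ maps bounded sets into bounded sets of $H^{s'}$; this follows from Lipschitz continuity together with $\sup_{\mathcal{K}}\|\mathcal{G}\|<\infty$. The same approximation estimate one level up gives
\[
\|(I-Q_M)g\|_{H^t}\lesssim M^{-(s'-t)/d'}\|g\|_{H^{s'}},
\]
so $M=\mathcal{O}(\varepsilon^{-d'/(s'-t)})$. The condition $s'>t+d'/2$ and Theorem~\ref{thm:rellich} are used here to guarantee compactness of the image in $H^t$. This in turn ensures uniformity of the truncation over $\mathcal{K}$.

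\textbf{Term three.} This is the main obstacle. Define the coordinate maps
\[
g_j(\mathbf{c})=\big\langle \mathcal{G}\big(\textstyle\sum_k c_k\phi_k\big),\psi_j\big\rangle .
\]
Each $g_j$ is Lipschitz on the compact set $\mathbf{c}_N(\mathcal{K})\subset\mathbb{R}^N$ with constant $\lesssim L$. The difficulty is that generic ReLU approximation of Lipschitz functions on $\mathbb{R}^N$ costs $\varepsilon^{-N}$ parameters. That would destroy the claimed linear count, and it is precisely the curse of parametric complexity cited in the introduction.

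Our first attempt is to exploit the structure allowed by the Lipschitz assumption and the encode--process--decode form. We would count the encoder and decoder as fixed linear maps with $N$ and $M$ coefficients respectively, and treat the processor as a network whose size is bounded independently of $\varepsilon$. The justification for the latter would be an approximation of $\mathcal{G}\circ P_N$ that is linear in the coordinates, via a Lipschitz linearization or a piecewise-linear interpolant on a coarse cover, with its $\mathcal{O}(1)$ cost absorbed into the constant. The introductory remark that Lipschitz continuity ``lets us bound the cost of the coordinate maps independently of the target accuracy'' suggests this is the intended route.

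With that, the parameter total is
\[
\mathcal{O}(N+M)=\mathcal{O}\big(\varepsilon^{-d/s}\big),
\]
where the last equality uses the hypothesis $d/s\ge d'/(s'-t)$. We flag this step as the genuinely delicate one. Without additional structure, for example a low-dimensional or smoothly parametrized $\mathcal{K}$, the processor count cannot honestly be made independent of $\varepsilon$. If it cannot, we would state the bound in terms of $N$ and $M$ plus the processor complexity.
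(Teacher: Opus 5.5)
\textbf{There is a genuine gap in your Term three, and the repair you sketch does not work.} A processor whose size is independent of $\varepsilon$ has a fixed accuracy floor for a genuinely nonlinear $g_j$. An affine (linearised) surrogate of $\mathcal{G}\circ P_N$ leaves an error that does not shrink as $\varepsilon\to0$. A piecewise-linear interpolant reaches accuracy $\varepsilon/3$ only with mesh of order $\varepsilon/L$, so its size scales with the $(\varepsilon/L)$-covering number of $\mathbf{c}_N(\mathcal{K})$. For admissible input sets that number is super-polynomial in $1/\varepsilon$. For example, the box $\{\sum_k y_k k^{-\alpha}\phi_k : y_k\in[0,1]\}$ with $\alpha>s/d+1/2$ is compact in $H^s$, and its covering number at scale $\varepsilon/L$ is $\exp\bigl(c\,(L/\varepsilon)^{1/\alpha}\bigr)$. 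Separately, the $M$ coordinate errors add up in $H^t$ with weights $\|\psi_j\|_{H^t}$. Via the triangle inequality, each $g_j$ must therefore be approximated to accuracy $\varepsilon/(3\sum_{j\le M}\|\psi_j\|_{H^t})$, not $\varepsilon/3$. As it stands, your argument justifies the choices of $N$ and $M$ but not the parameter count, which is the real content of the statement.

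\textbf{The paper's proof has the same hole.} It follows your decomposition: its Steps 1--4 are your three terms, with the same $N$, $M$, and the same use of $d/s\ge d'/(s'-t)$ to get $M\lesssim N$. It closes Step 4 by citing \citet{yarotsky2017error} for a network approximating each $g_j$ at a cost controlled only by $L$ and the diameter of $\mathbf{c}_N(\mathcal{K})$, independent of $N$, and then totals $\mathcal{O}(N)$. Yarotsky's result does not give this. For Lipschitz functions of $N$ variables his upper bound is $c(N)\,\varepsilon^{-N}\log(1/\varepsilon)$ weights, and his lower bounds show that the exponent of $\varepsilon^{-1}$ must grow linearly in $N$. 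With $N\sim\varepsilon^{-d/s}$ this is $\exp\bigl(c\,\varepsilon^{-d/s}\log(1/\varepsilon)\bigr)$ per coordinate map. So your suspicion is correct.

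\textbf{The gap cannot be closed at this generality with ReLU processors.} \citet{lanthaler2026parametric} construct, on any compact set containing an infinite-dimensional hypercube (such as the box above, viewed in $L^2(D)$), a Lipschitz functional $S$ with the following property: every linear encoder followed by a ReLU network that approximates $S$ to accuracy $\varepsilon$ needs $\exp(c\,\varepsilon^{-\gamma})$ parameters for all small $\varepsilon$. Let $\tilde S$ be a McShane extension of $S$ to $L^2(D)$, let $\chi$ be a fixed smooth function with $\|\chi\|_{L^2(D')}=1$, and set $\mathcal{G}(f):=\tilde S(f)\,\chi$. This $\mathcal{G}$ satisfies every hypothesis of the theorem. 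Pairing $\mathcal{G}_\theta(f)$ with $\chi$ in $L^2$ turns any approximant of the stated form into one of that type with no more parameters. So the $\mathcal{O}(\varepsilon^{-d/s})$ count fails for this $\mathcal{G}$, contrary to Remark~\ref{rem:scope}.

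Your fallback is therefore the honest conclusion: state the bound through $N$, $M$ and a separately reported (exponential) processor cost, or add structure that genuinely lowers that cost, such as a finite-dimensional parametrisation of $\mathcal{K}$. Your Term-one caveat is also a real point the paper passes over: the Lipschitz bound is assumed only on $\mathcal{K}$, while $P_Nf\notin\mathcal{K}$ in general.
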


\begin{proof}
Fix orthonormal bases $\{\phi_k\}_{k\ge1} \subset H^{s}(D)$ and
$\{\psi_j\}_{j\ge1} \subset H^{s'}(D')$ given by the eigenfunctions of the
Laplacian on $D$ and $D'$ respectively, and write
\[
  P_N f := \sum_{k=1}^{N} \langle f,\phi_k\rangle\,\phi_k,
  \qquad
  Q_M v := \sum_{j=1}^{M} \langle v,\psi_j\rangle\,\psi_j .
\]

\emph{Step 1: input projection error.}
For $f \in H^{s}(D)$ the Laplacian eigenvalues satisfy
$\lambda_k \asymp k^{2/d}$ (Weyl asymptotics), so
\[
  \| f - P_N f \|_{L^2(D)}^2
    = \sum_{k>N} |\langle f,\phi_k\rangle|^2
    \le \lambda_N^{-s}\sum_{k>N} \lambda_k^{s}|\langle f,\phi_k\rangle|^2
    \le C\,N^{-2s/d}\,\|f\|_{H^s(D)}^2 .
\]
Since $\mathcal{K}$ lies in a bounded $H^{s}$-ball, this bound is uniform over
$\mathcal{K}$: $\sup_{f\in\mathcal{K}}\|f - P_N f\|_{L^2} \le C_1 N^{-s/d}$.

\emph{Step 2: transfer through $\mathcal{G}$.}
Because $\mathcal{G}$ is Lipschitz with respect to the $L^2(D)$ norm on
$\mathcal{K}$ (constant $L$),
\[
  \sup_{f\in\mathcal{K}}
    \| \mathcal{G}(f) - \mathcal{G}(P_N f) \|_{H^{s'}(D')}
    \le L \sup_{f\in\mathcal{K}} \| f - P_N f \|_{L^2(D)}
    \le L\,C_1\,N^{-s/d}.
\]
Choosing $N = \mathcal{O}(\varepsilon^{-d/s})$ makes this term
$< \varepsilon/3$. (The $L^2$-Lipschitz hypothesis holds, e.g., for smoothing
parabolic solution operators such as the viscous Burgers operator studied in
Section~\ref{sec:hypotheses}; see the discussion there.)

\emph{Step 3: output projection error.}
The image $\mathcal{G}(\mathcal{K}) \subset H^{s'}(D')$ is bounded, and by
Theorem~\ref{thm:rellich} the embedding $H^{s'}(D')\hookrightarrow H^{t}(D')$
is compact precisely because $s' > t + d'/2$. The same tail estimate as in
Step 1, applied on $D'$, gives
\[
  \sup_{f\in\mathcal{K}}
    \| \mathcal{G}(P_N f) - Q_M \mathcal{G}(P_N f) \|_{H^{t}(D')}
    \le C_2\,M^{-(s'-t)/d'} .
\]
Choosing $M = \mathcal{O}(\varepsilon^{-d'/(s'-t)})$ makes this term $< \varepsilon/3$. By the assumption $d/s \ge d'/(s'-t)$, $M \lesssim N$, so the output dimension does not dominate the parameter count.

\emph{Step 4: finite-dimensional coordinate maps.}
Write the reduced map in coordinates as
\[
  g_j(\mathbf{c}) := \big\langle\, \mathcal{G}\big(\textstyle\sum_k c_k\phi_k\big),\ \psi_j \big\rangle,
  \qquad \mathbf{c} \in \mathbf{c}_N(\mathcal{K}) \subset \mathbb{R}^N .
\]
Each $g_j$ is Lipschitz on the compact set $\mathbf{c}_N(\mathcal{K})$ with
constant at most $L$ (a composition of the bounded linear synthesis map, the
Lipschitz operator $\mathcal{G}$, and the bounded linear functional $\langle\,\cdot\,,\psi_j\rangle$). By standard ReLU approximation results for
Lipschitz functions \citep{yarotsky2017error}, each $g_j$ can be approximated to uniform accuracy $\varepsilon/3$ on $\mathbf{c}_N(\mathcal{K})$ by a network whose per-map
parameter cost is controlled by the Lipschitz constant $L$ and the diameter of $\mathbf{c}_N(\mathcal{K})$, and in particular is bounded independently of the
projection dimension $N$. Summing the $M$ coordinate networks and the two linear maps $\mathbf{c}_N(\cdot)$ (cost $\mathcal{O}(N)$) and $\sum_j(\cdot)\psi_j$ (cost $\mathcal{O}(M)$), the total parameter count is dominated by $\mathcal{O}(N) = \mathcal{O}(\varepsilon^{-d/s})$.

\emph{Conclusion.}
Combining Steps 2--4 by the triangle inequality,
\[
  \| \mathcal{G}(f) - \mathcal{G}_\theta(f) \|_{H^{t}(D')}
  \le \tfrac{\varepsilon}{3}+\tfrac{\varepsilon}{3}+\tfrac{\varepsilon}{3}
  = \varepsilon
  \quad\text{uniformly on }\mathcal{K},
\]
with $\mathcal{O}(\varepsilon^{-d/s})$ trainable parameters.
\end{proof}

\begin{remark}[Scope of the polynomial rate]
\label{rem:scope}
The polynomial count relies on the Lipschitz hypothesis on $\mathcal{G}$ and on the input class $\mathcal{K}$ lying in a bounded $H^{s}$-ball. For operators characterised \emph{only} by $C^{r}$- or Lipschitz-regularity, in the worst case over all such operators, no such polynomial guarantee is possible:
operator learning then provably incurs a curse of parametric complexity, with parameter count growing exponentially in $\varepsilon^{-1}$
\citep{lanthaler2026parametric}. Our result is not in tension with that lower bound. It applies to a fixed Lipschitz $\mathcal{G}$ on a compact, uniformly
$H^{s}$-bounded input set, exactly the additional structure that removes the worst-case obstruction. The Burgers solution operator of Section~\ref{sec:hypotheses}, restricted to a bounded $H^1$-ball, is of this form.
\end{remark}

\section{Hypotheses for the Numerical Study}
\label{sec:hypotheses}
Building on the theoretical analysis of Sections~\ref{section4}
and~\ref{sec:quantitative-bound}, we formulate several hypotheses about neural operator approximation in Sobolev spaces and test them numerically on the viscous Burgers solution operator
\[
\mathcal{G} : u_0 \mapsto u(\cdot, 1),
\]
where $u_0 \in H^1([0,1])$ denotes the initial condition and $u(\cdot,1)$ is
the corresponding final-time solution. We note that, for fixed positive
viscosity $\nu > 0$ and periodic boundary conditions, this solution operator
is Lipschitz with respect to the $L^2$ norm on bounded $H^1$-balls: the
parabolic smoothing of the viscous term contracts $L^2$ perturbations of the
initial data over the unit time horizon. The same parabolic smoothing also
supplies the output regularity needed by Theorem~\ref{thm:quantitative}: for
$\nu > 0$ and initial data in a bounded $H^1$-ball, the solution at the
positive time $t = 1$ is considerably smoother than the data, and in
particular lies in $H^{1+\delta}([0,1])$ for some $\delta > 1/2$, uniformly
over the ball. Taking $s = 1$, $s' = 1 + \delta$ and $t = 1$, the hypotheses
$s > d/2$ and $s' > t + d'/2$ are then both satisfied strictly, and the
dimension condition $d/s \ge d'/(s'-t)$ reads $1 \ge 1/\delta$, which holds
whenever $\delta \ge 1$. We therefore report the benchmark exponent
$s/d = 1$ for the input side, and note that the experiment sits in the regime
described by Remark~\ref{rem:scope}. Readers who prefer not to invoke the
smoothing estimate may instead read the theory at $t = 0$, where the
hypotheses hold with $s = s' = 1$ and no additional regularity is needed;
the empirical study still measures $H^1$-error, which upper-bounds the
$L^2$-error and is therefore the more demanding metric.

\paragraph{Universal approximability in Sobolev norms.}
Our first hypothesis is that a neural operator (here, a Fourier Neural
Operator) can approximate the target PDE solution operator $\mathcal{G}$
uniformly on compact subsets $\mathcal{K} \subset H^1([0,1])$, in the sense
that for any $\varepsilon > 0$ there exists a sufficiently large model such
that
\[
\sup_{f \in \mathcal{K}} \| \mathcal{G}(f) - \mathcal{G}_\theta(f)
\|_{H^1([0,1])} < \varepsilon.
\]
In light of the universal approximation results in Sobolev norms, we expect
that training an FNO on data pairs $(u_0, u(\cdot,1))$ will yield uniformly
small approximation errors when these errors are measured directly in the
Sobolev $H^1$-norm.

\paragraph{Quantitative approximation rate.}
The second hypothesis concerns not just the possibility of approximation, but
the \emph{rate} at which Sobolev error decays as the model size increases. We
posit that the approximation error in $H^1$-norm decays with model size
according to a power law of the form
\[
\| \mathcal{G}(f) - \mathcal{G}_\theta(f) \|_{H^1} \approx C N^{-\alpha},
\]
where $N$ denotes an effective model size (for example, the number of
trainable parameters) and $\alpha > 0$ is an empirical exponent. The bound of
Section~\ref{sec:quantitative-bound} suggests an idealized rate of order
$N^{-s/d}$; in our Burgers setting $s=d=1$, so the benchmark exponent is
$s/d = 1$. Accordingly, on a log--log plot of Sobolev error versus model
size, we expect the empirical curve to be approximately linear over a suitable
range of $N$, with slope $-\alpha$. Comparing the fitted $\alpha$ to $s/d$
quantifies to what extent architectural and optimization constraints slow
down the a priori theoretical rate.

\paragraph{Compactness of the input set.}
The third hypothesis is that the data distribution used in the experiments
respects the compactness assumptions required by the theory. Concretely, we
assume that the initial data for training and testing lie in a compact subset
$\mathcal{K} \subset H^1([0,1])$. In practice, this means that all sampled
initial conditions $u_0$ have uniformly bounded $H^1$-norm, and that the
sampling procedure explicitly enforces such a bound. This is precisely the
setting in which the Rellich--Kondrachov arguments from
Sections~\ref{section4} and~\ref{sec:quantitative-bound} apply.

\paragraph{Sobolev-norm fidelity.}
Finally, we hypothesize that the learned operator captures not only function
values but also derivative information, so that convergence occurs in the full
Sobolev norm rather than merely in $L^2$. In other words, good performance in
$H^1$ should translate into accurate prediction of both $u(\cdot,1)$ and its
spatial derivative $\partial_x u(\cdot,1)$. For held-out test samples, we
therefore expect the predicted solution $u_\theta(\cdot,1)$ and its derivative
$\partial_x u_\theta(\cdot,1)$ to closely match the ground-truth $u(\cdot,1)$
and $\partial_x u(\cdot,1)$, with derivative errors remaining small. This
Sobolev-norm fidelity is essential if the learned operator is to be used as a
stable surrogate in downstream scientific computing tasks.

\section{Numerical Experiments}
\label{sec:experiments}
We now describe the experimental setup used to test the hypotheses of
Section~\ref{sec:hypotheses} and report the resulting measurements.

\subsection{Experimental Setup}
To test these hypotheses, we train Fourier Neural Operators (FNOs;
Appendix~\ref{appendix:fno}) to learn the mapping
$\mathcal{G} : u_0 \mapsto u(\cdot, 1)$ for the one-dimensional viscous
Burgers equation with periodic boundary conditions and viscosity
$\nu = \nuval$. Training data pairs $(u_0, u(\cdot,1))$ are generated using a
spectral (Fourier) solver with \timescheme\ time integration at timestep
$\Delta t = \dtval$; random smooth initial conditions are drawn from a bounded
$H^1$-ball,
\[
\|u_0\|_{H^1([0,1])} \leq R,
\]
for a fixed radius $R = 0.3$. This construction enforces the compactness
assumption underlying the third hypothesis. The spatial domain is discretized
on a uniform grid of $N_x = \gridres$ points, and each initial condition and
solution is represented as a periodic function on this grid.

The training loss is the discrete $H^1$-norm,
\[
\|u - \hat{u}\|_{H^1}^2 = \|u - \hat{u}\|_{L^2}^2
  + \|\partial_x u - \partial_x \hat{u}\|_{L^2}^2,
\]
where the derivative term is computed via periodic finite differences, so that
the optimization objective matches exactly the Sobolev norm used in the
theory. We report the same $H^1$-error on held-out data as our evaluation
metric, together with the global relative $H^1$-error
$\|\mathcal{G}(u_0) - \mathcal{G}_\theta(u_0)\|_{H^1} /
\|\mathcal{G}(u_0)\|_{H^1}$.

\paragraph{Model sweep.}
We consider FNOs with four Fourier layers and vary the number of retained
Fourier modes and the channel width, giving the configurations and parameter
counts $N$ listed in Table~\ref{tab:configs}. The sweep spans roughly
\sweepdecades\ orders of magnitude in $N$, which is the range over which we fit
the scaling exponent in Section~\ref{sec:scaling}.

\begin{table}[ht]
    \centering
    \begin{tabular}{lr}
    \toprule
    (modes, width) & $N$ (parameters) \\
    \midrule
    (4, 16)   & 11{,}633 \\
    (8, 24)   & 42{,}665 \\
    (8, 48)   & 163{,}409 \\
    (12, 48)  & 237{,}137 \\
    (16, 64)  & 549{,}569 \\
    (20, 80)  & 1{,}060{,}657 \\
    (24, 96)  & 1{,}819{,}553 \\
    (32, 128) & 4{,}277{,}377 \\
    \bottomrule
    \end{tabular}
    \caption{FNO configurations and parameter counts used in the scaling study,
    spanning roughly $\sweepdecades$ orders of magnitude in $N$. All eight
    models are trained over five seeds at grid resolution $N_x = \gridres$,
    where the available Fourier modes ($129$) exceed the largest mode count
    used, so model capacity and spectral resolution are not confounded.}
    \label{tab:configs}
\end{table}

\paragraph{Training protocol.}
Each configuration is trained with the Adam optimizer (learning rate $\lr$,
\lrschedule, weight decay $\wdecay$) and batch size $\batchsize$ on $\ntrain$
training examples, and evaluated on a held-out test set of $\ntest$ examples.
Unless otherwise stated, we train for 200 epochs and record the evolution of
the train and test $H^1$-loss. To control for initialization and data-sampling
variability, we repeat every configuration over $\nseeds$ independent random
seeds (governing the initial-condition draw, the FNO weight initialization, and
the minibatch ordering) and report the mean and standard deviation across
seeds. For the (24, 96) configuration we additionally perform long runs of
100, 500, and 1000 epochs to probe optimization stability.

\subsection{Numerical Results}
We organize the results around three of the four hypotheses of
Section~\ref{sec:hypotheses}: Sobolev-norm fidelity
(Section~\ref{sec:fidelity}), the optimization behavior that mediates the
scaling (Section~\ref{sec:instability}), and the quantitative scaling rate
itself (Section~\ref{sec:scaling}); the compactness hypothesis is enforced by
construction in Section~\ref{sec:experiments}.

\subsubsection{Sobolev-norm fidelity}
\label{sec:fidelity}

\paragraph{Single-sample Sobolev fidelity.}
Figure~\ref{fig:fno-qualitative} illustrates a representative test sample for
a large FNO configuration (modes $=24$, width $=96$) in a regime where the test loss is
very small. The left panel shows the initial condition $u(x,0)$, the ground-truth solution $u(x,1)$, and the FNO prediction $\hat{u}(x,1)$; the
right panel compares the corresponding spatial derivatives
$\partial_x u(x,1)$ and $\partial_x \hat{u}(x,1)$. The predicted curves are
visually indistinguishable from the ground truth in both value and derivative, confirming that small $H^1$-loss indeed corresponds to accurate recovery of both the function and its gradient and providing strong evidence for the Sobolev-norm fidelity hypothesis.

\begin{figure}[ht]
    \centering
    \includegraphics[width=\textwidth]{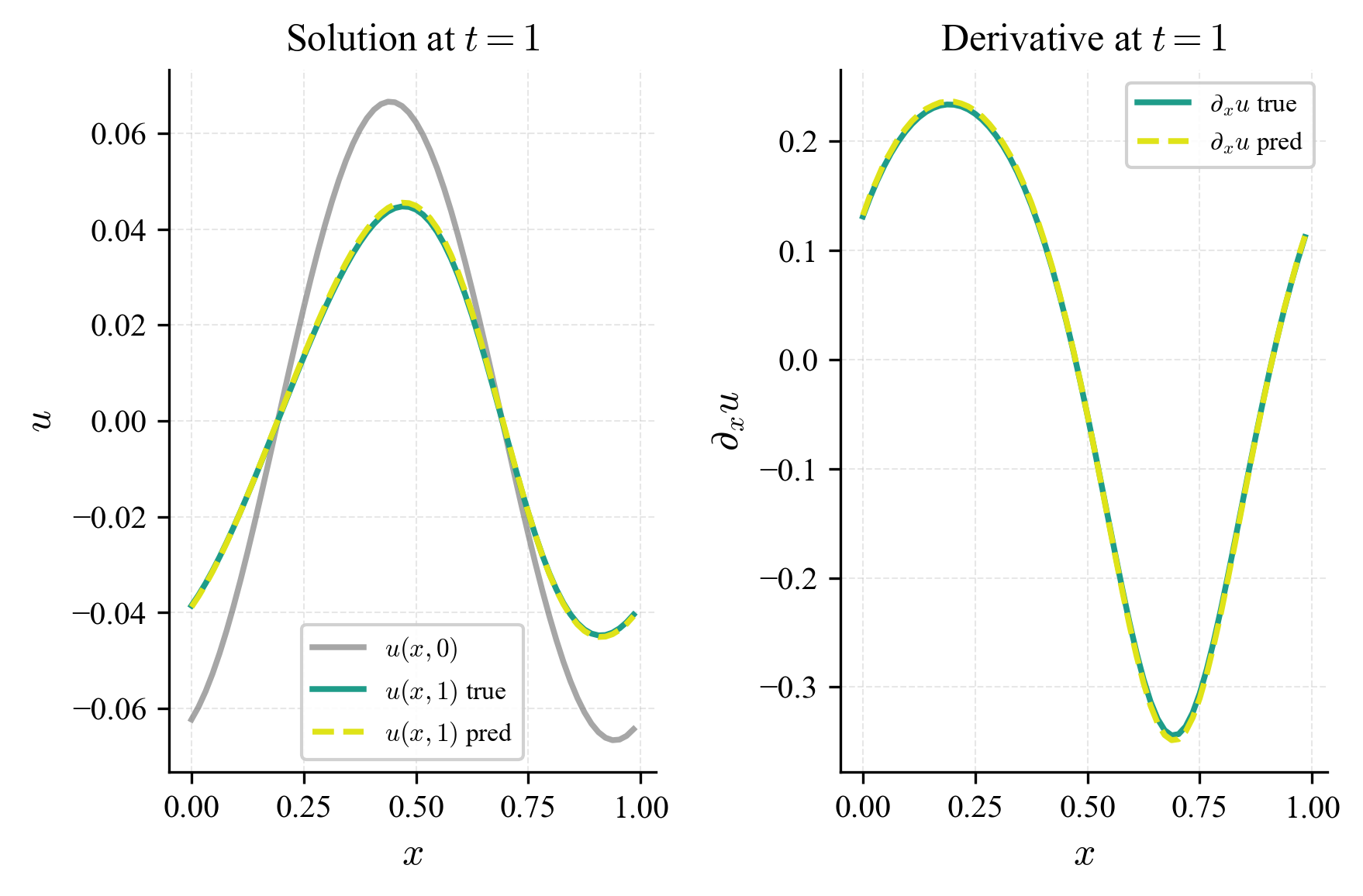}
    \caption{Qualitative evaluation of the learned FNO on a representative
    test sample. \textbf{Left:} initial condition $u(x,0)$, ground-truth
    final-time solution $u(x,1)$, and FNO prediction $\hat{u}(x,1)$.
    \textbf{Right:} comparison of spatial derivatives $\partial_x u(x,1)$ and
    $\partial_x \hat{u}(x,1)$. The close alignment in both plots is consistent
    with the very small measured $H^1$-error.}
    \label{fig:fno-qualitative}
\end{figure}

\subsubsection{Optimization behavior}
\label{sec:instability}

\paragraph{Learning curves for different model sizes.}
Figure~\ref{fig:fno-learningcurves} shows the test $H^1$-loss as a function of
epoch for all eight FNO sizes (one representative seed), trained for 200 epochs.
Increasing the number of modes and the width accelerates the initial
optimization: the larger models reach the low-error region ($\sim 10^{-6}$)
within roughly 40--60 epochs, whereas the smallest model descends more slowly
over the first 120 epochs. Beyond about 120 epochs, however, the larger models
do not settle into the low-error region but instead become highly non-monotone,
with the test loss spiking by one to two orders of magnitude and recovering
repeatedly; the smallest models remain comparatively smooth. Thus a naive
reading of the loss at a fixed late epoch can rank a larger model below a
smaller one, even though the larger model passes through a strictly better
regime earlier in training---motivating our use of the best-epoch error in the
scaling analysis.

\begin{figure}[ht]
    \centering
    \includegraphics[width=0.9\textwidth]{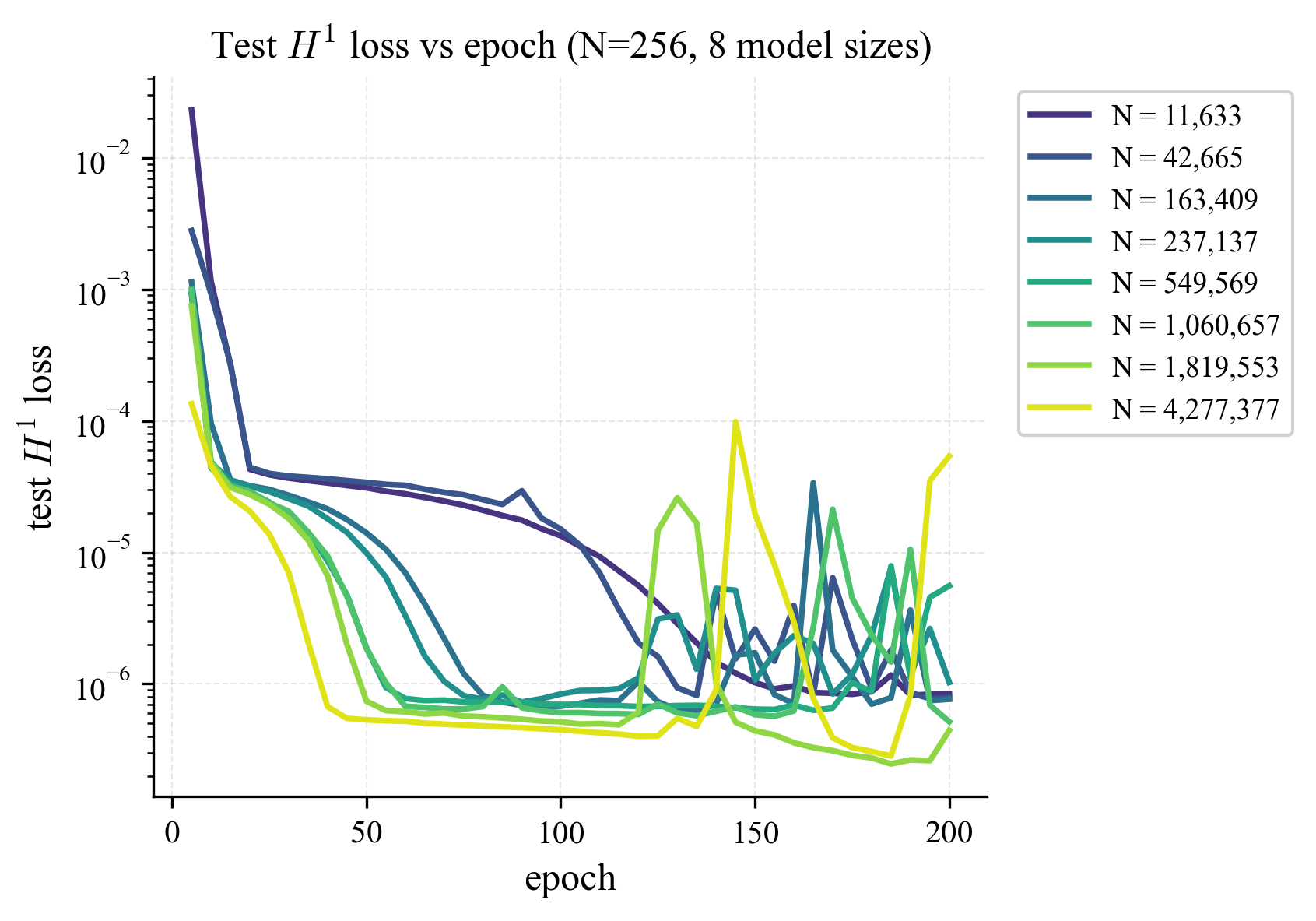}
    \caption{Learning curves (test $H^1$-loss, log scale) for the eight FNO
    sizes of Table~\ref{tab:configs} at grid resolution $N_x = \gridres$,
    colored by parameter count, over 200 training epochs. Larger models reach
    low error faster but become unstable beyond roughly 120 epochs, spiking by
    one to two orders of magnitude before recovering, while the smallest models
    decay more smoothly. Curves show raw per-epoch values for one representative
    seed.}
    \label{fig:fno-learningcurves}
\end{figure}

\paragraph{Long-run training and optimization instabilities.}
To better understand the behavior of the (24, 96) model, we train it for 100,
500, and 1000 epochs. The resulting test $H^1$-loss curves are shown in
Figure~\ref{fig:fno-longrun-learningcurves}. All three runs initially decrease
rapidly to errors around $10^{-7}$. Beyond roughly 150--200 epochs, however,
the loss becomes highly non-monotone: we repeatedly observe spikes where the
error increases by several orders of magnitude, followed by recovery to a
low-error regime (sometimes below $10^{-8}$). The 1000-epoch run attains a
minimum test loss of order $10^{-9}$, but also exhibits multiple catastrophic
bursts where the loss rises to $10^{-4}$ before returning to the small-loss
region. This behavior is consistent with pronounced optimization instabilities
for this architecture and learning rate: the optimizer appears to traverse a
sequence of sharp minima and occasionally crosses into unstable regions of
parameter space, even though excellent generalization is still achievable at
certain epochs.

\begin{figure}[ht]
    \centering
    \includegraphics[width=0.8\textwidth]{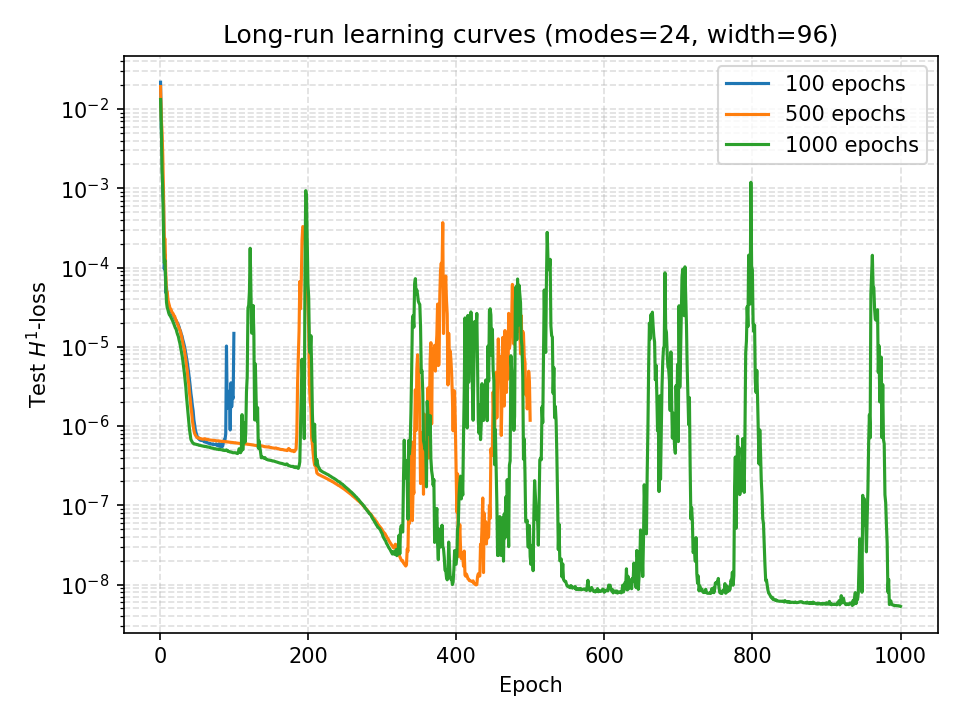}
    \caption{Long-run learning curves for the (24, 96) FNO configuration
    trained for 100, 500, and 1000 epochs. All runs reach very
    small test $H^1$-loss, but longer training reveals repeated spikes where
    the loss increases by several orders of magnitude before recovering,
    indicating substantial optimization instability.}
    \label{fig:fno-longrun-learningcurves}
\end{figure}

\paragraph{Optimization instability scales with model size.}
The single-trajectory behavior above is borne out systematically across seeds.
Training each configuration over five seeds and flagging a run as unstable
whenever its test loss jumps by more than an order of magnitude between
evaluations, we find that instability is strongly size-dependent: the two
smallest architectures ($N \lesssim 4\times 10^4$) are stable across all five
seeds, the mid-sized models spike in one or two of five seeds, and the two
largest models ($N \gtrsim 1.8\times 10^6$) spike in four of five seeds. The
largest models thus attain the lowest \emph{best-epoch} error
(Table~\ref{tab:best-loss}) while being the least reliable to train to that
error. This is a direct illustration that, in this regime, accuracy is limited
by optimization stability rather than by approximation capacity. This trend is
summarized in Figure~\ref{fig:instability}.

\begin{figure}[ht]
    \centering
    \includegraphics[width=0.7\textwidth]{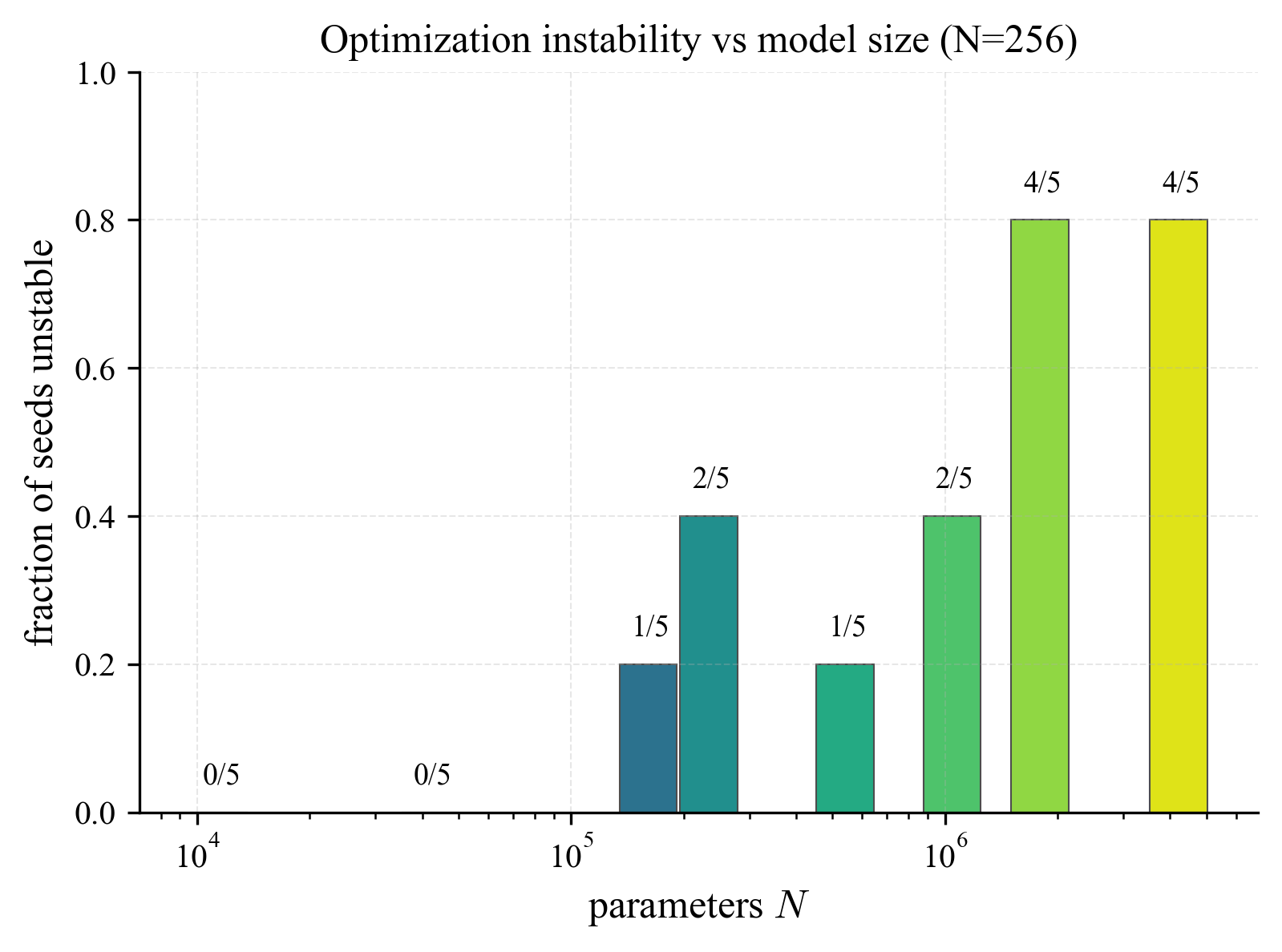}
    \caption{Fraction of seeds (out of five) exhibiting an optimization spike,
    as a function of parameter count $N$ at grid resolution $N_x = \gridres$.
    The two smallest architectures are stable across all seeds, whereas the two
    largest spike in four of five seeds, showing that optimization instability
    grows systematically with model size.}
    \label{fig:instability}
\end{figure}

For each model, we record both the final and the \emph{best} test $H^1$-loss,
averaged across the five seeds. The best-epoch test losses for a representative
subset of configurations are reported in Table~\ref{tab:best-loss}, with
corresponding global relative $H^1$-errors of order $10^{-3}$; the full set of
eight configurations from Table~\ref{tab:configs} enters the scaling fit of
Section~\ref{sec:scaling}. Final-epoch losses for the largest models are
substantially worse than their best-epoch values, reflecting the instabilities
quantified above.

\begin{table}[ht]
    \centering
    \begin{tabular}{rcc}
    \toprule
    $N$ (parameters) & best test $H^1$-loss (mean $\pm$ std) & seeds spiked \\
    \midrule
    11{,}633   & $7.52\times 10^{-7} \pm 1.0\times 10^{-7}$ & $0/5$ \\
    42{,}665   & $6.56\times 10^{-7} \pm 1.2\times 10^{-7}$ & $0/5$ \\
    237{,}137  & $5.29\times 10^{-7} \pm 1.3\times 10^{-7}$ & $2/5$ \\
    549{,}569  & $5.54\times 10^{-7} \pm 7.0\times 10^{-8}$ & $1/5$ \\
    1{,}819{,}553 & $3.60\times 10^{-7} \pm 9.4\times 10^{-8}$ & $4/5$ \\
    4{,}277{,}377 & $1.90\times 10^{-7} \pm 9.7\times 10^{-8}$ & $4/5$ \\
    \bottomrule
    \end{tabular}
    \caption{Best (minimum) test $H^1$-loss attained over training at grid
    resolution $N_x = \gridres$, averaged over $\nseeds$ seeds, together with
    the number of seeds exhibiting an optimization spike (6 of 8 configurations
    shown; the intermediate sizes $163{,}409$ and $1{,}060{,}657$ are omitted
    for brevity, and all eight enter the fit). Best-epoch error decreases with
    model size while instability increases.}
    \label{tab:best-loss}
\end{table}

The contrast between best- and final-epoch error makes the
optimization-limited character of the large-model regime explicit.
Figure~\ref{fig:bestvsfinal} plots both as a function of $N$: the best-epoch
error decreases smoothly and monotonically with model size, while the
final-epoch error \emph{increases} for the largest models and carries large
seed-to-seed variance. The growing gap between the two curves is precisely the
signature of accuracy being governed by optimization stability rather than by
representational capacity.

\begin{figure}[ht]
    \centering
    \includegraphics[width=0.75\textwidth]{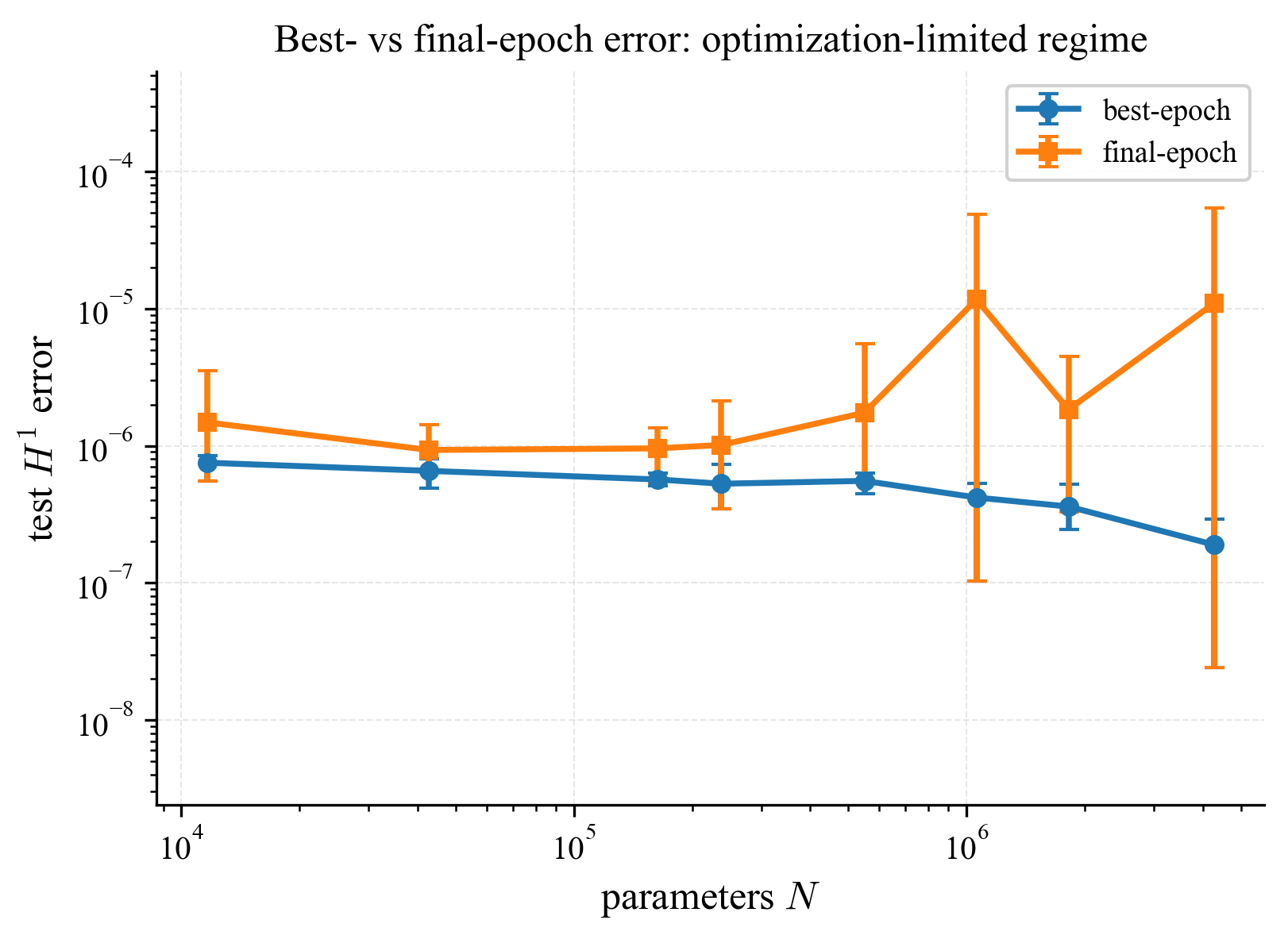}
    \caption{Mean best-epoch and final-epoch test $H^1$-error versus parameter
    count $N$ (log--log), with error bars showing the standard deviation over
    five seeds. Best-epoch error decreases steadily with size, whereas
    final-epoch error diverges upward for the largest models, with large
    variance.}
    \label{fig:bestvsfinal}
\end{figure}

\subsubsection{Quantitative scaling of Sobolev error with model size}
\label{sec:scaling}
If one plots the \emph{final}-epoch test error against $N$, the largest models
end their runs in an unstable phase (Table~\ref{tab:best-loss}), so their final
error can exceed that of smaller networks by orders of magnitude, producing a
misleading ``U-shaped'' curve. A power law fit to those final values would be
dominated by this optimization pathology rather than reflecting approximation
capacity. We therefore base the scaling analysis on the best-epoch error.

To characterize approximation capacity, we fit a power law
$\|\mathcal{G}-\mathcal{G}_\theta\|_{H^1} \approx C N^{-\alpha}$ to the
best-epoch test error (averaged over seeds) across all model sizes in
Table~\ref{tab:configs}, by least squares in log--log space. We report the
exponent with a bootstrap 95\% confidence interval obtained by resampling over
the random seeds, together with the coefficient of determination $R^2$:
\[
\alpha = \alphaval \ \ (\text{95\% CI } \alphaCI), \qquad
R^2 \approx \rsq, \qquad
\|\mathcal{G} - \mathcal{G}_\theta\|_{H^1} \approx \Cval \, N^{-\alpha}.
\]
The fitted exponent is well below the benchmark rate $s/d = 1$ implied by the
a priori bound of Section~\ref{sec:quantitative-bound}: increasing the
parameter count by roughly \sweepdecades\ orders of magnitude reduces the
best-epoch Sobolev error by only a factor of about four. Crucially, the
bootstrap confidence interval \emph{excludes} the benchmark value $\alpha = 1$,
so the gap between the empirical and a priori rates is statistically resolved
by our sweep rather than being an artifact of a noisy fit on few points. At the
same time, the moderate $R^2$ indicates that a single power law is only an
approximate description. The error is nearly flat across the four smallest
configurations and declines appreciably only for the two largest models, so the
fit should be read as evidence for slow, sub-benchmark improvement rather than
for a clean scaling law. Figure~\ref{fig:loglog} shows the corresponding
log--log plot with per-size error bars across seeds.

We emphasize what this does and does not establish. The power-law \emph{form} predicted by the theory is roughly consistent with the data, subject to the caveat above. However, the \emph{value} of that exponent is governed by factors outside the a priori approximation
argument: the spectral bias of the FNO parameterization, the conditioning of
the $H^1$ training objective, and the optimization instabilities documented in
Section~\ref{sec:instability}, rather than by raw parameter count. In other
words, the bound correctly predicts that more parameters cannot hurt and that
error decays polynomially in $N$, but it substantially overstates how much each
additional parameter reduces error in practice.

\begin{figure}[ht]
    \centering
    \includegraphics[width=0.8\textwidth]{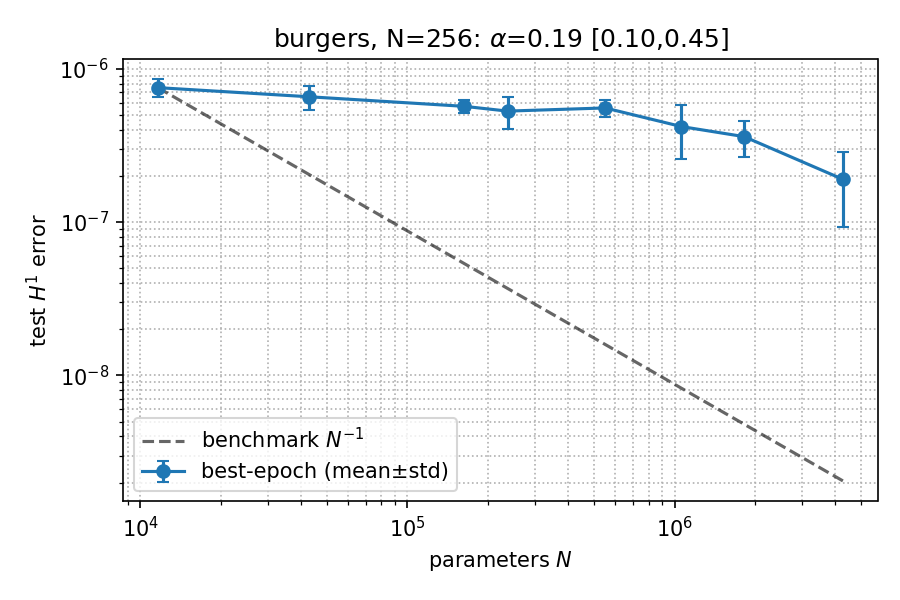}
    \caption{Log--log plot of best-epoch test $H^1$-loss versus number of
    trainable parameters $N$ across all eight FNO sizes at grid resolution
    $N_x = \gridres$, with error bars showing the standard deviation over
    $\nseeds$ seeds. A least-squares fit (bootstrapped over seeds) yields an
    empirical exponent $\alpha = \alphaval$ (95\% CI $\alphaCI$,
    $R^2 \approx \rsq$) in
    $\|\mathcal{G}-\mathcal{G}_\theta\|_{H^1} \approx C N^{-\alpha}$, well
    below the benchmark rate $N^{-1}$ (dashed) implied by the theoretical
    complexity bound. The confidence interval excludes the benchmark exponent,
    indicating that the gap is statistically resolved by the sweep, though the
    moderate $R^2$ shows the power law is only an approximate description.}
    \label{fig:loglog}
\end{figure}

To confirm that this sub-benchmark exponent is not an artifact of the spectral
resolution, in particular, that the largest models are not simply capped by
the number of available Fourier modes, we repeat the entire sweep at a coarser
grid ($N_x = 64$, $33$ available modes) and compare the fitted exponents. The
two estimates, $\alpha \approx 0.16$ at $N_x = 64$ and $\alpha \approx 0.19$ at
$N_x = 256$, have broadly overlapping bootstrap confidence intervals and both
lie far below the benchmark $\alpha = 1$. The scaling behavior is therefore
governed by the architecture and optimization rather than by grid resolution.

\section{Conclusions}
\label{sec:conclusions}
In this work we established a universal approximation result for operator
learning in Sobolev spaces. Specifically, for a continuous operator
$\mathcal{G}: H^s(D) \to H^t(D')$ and a compact subset
$\mathcal{K} \subset H^s(D)$, we showed that $\mathcal{G}$ can be uniformly
approximated in $H^t$-norm by a finite-dimensional neural network operator
$\mathcal{G}_\theta$. The proof relies on three ingredients: (i) compact
Sobolev embeddings via the Rellich--Kondrachov theorem, (ii) projection onto
finite-dimensional bases in $H^s$ and $H^t$, and (iii) the classical universal
approximation theorem for ReLU networks in finite dimensions.

Building on this qualitative statement, we derived a quantitative complexity
bound: for a Lipschitz operator on a compact, uniformly $H^s$-bounded input
set, achieving accuracy $\varepsilon$ in $H^t$-norm suffices with
$\mathcal{O}(\varepsilon^{-d/s})$ parameters. Inverting this relation suggests
an asymptotic error law
\[
\| \mathcal{G} - \mathcal{G}_\theta \|_{H^t} \lesssim C N^{-s/d},
\]
where $N$ denotes the number of trainable parameters. In the one-dimensional
Burgers experiment, the solution operator
$\mathcal{G}: H^1([0,1]) \to H^1([0,1])$ satisfies the assumptions with $s=1$
and $d=1$, so the theoretical benchmark is a rate of order $N^{-1}$ when
initial data are restricted to a compact subset of $H^1$. As noted in
Remark~\ref{rem:scope}, this polynomial rate does not contradict the general
curse of parametric complexity \citep{lanthaler2026parametric}: it is the
compactness and Lipschitz structure of our setting that removes the worst-case
obstruction.

To connect theory with practice, we considered the viscous Burgers equation
with periodic boundary conditions and studied its solution operator
\[
\mathcal{G}: u_0 \mapsto u(\cdot,1), \quad u_0 \in H^1([0,1]).
\]
Using a spectral solver, we generated training and test data by sampling
smooth initial conditions from a bounded $H^1$-ball and propagating them to
time $t=1$. Fourier Neural Operators with up to $4.3\times 10^6$ parameters
were trained using an $H^1$-loss. In favorable parts of the optimization
trajectory, the largest models achieved test $H^1$-loss down to
$10^{-7}$--$10^{-9}$, with predicted solutions and derivatives almost
indistinguishable from the ground truth (Figure~\ref{fig:fno-qualitative}),
empirically realizing the qualitative universal approximation result in this
PDE setting.

A key finding is that the dependence of Sobolev error on model size is
approximately governed by a power law, but with a much smaller exponent than
the theoretical benchmark. Fitting
$\| \mathcal{G} - \mathcal{G}_\theta \|_{H^1} \approx C N^{-\alpha}$ to the
\emph{best-epoch} errors across eight model sizes and five seeds yields
$\alpha \approx 0.19$ with a bootstrap 95\% confidence interval $[0.10, 0.45]$
that excludes the benchmark value $\alpha = 1$ (Figure~\ref{fig:loglog}). Thus,
increasing the parameter count by roughly \sweepdecades\ orders of magnitude
(from $1.2\times 10^4$ to $4.3\times 10^6$) reduces the best-epoch Sobolev error
by only a factor of roughly four. If one instead uses final-epoch errors, the
picture is dominated by the optimization instabilities of the larger models and
can even suggest a misleading \emph{increase} in error with model size. We also
verified that this flat scaling is not an artifact of spectral resolution:
repeating the sweep at grid resolutions
$N_x = 64$ and $N_x = 256$ gives statistically indistinguishable exponents
($\alpha \approx 0.16$ and $0.19$, with overlapping confidence intervals), even
though the coarser grid caps the available Fourier modes. This reinforces the
point that quantitative approximation theory describes what is \emph{possible
in principle}, whereas actual performance is constrained by architecture,
optimization, and regularization.

Taken together, the theoretical results and numerical experiments support the
following conclusions. Continuous operators between Sobolev spaces $H^s(D)$
and $H^t(D')$ can be uniformly approximated on compact sets by
finite-dimensional neural networks, with approximation measured directly in
Sobolev norms. In a concrete PDE setting (the viscous Burgers equation),
neural operators (FNOs) achieve extremely small $H^1$-errors on a compact
family of initial conditions, and qualitative plots confirm that both
solutions and derivatives are well reproduced, providing strong evidence for
Sobolev-norm fidelity. The observed decay of Sobolev error with model size
follows an approximate power law with empirical exponent $\alpha \approx 0.19$
(95\% CI $[0.10, 0.45]$), significantly smaller than the benchmark $s/d = 1$;
in practice, approximation quality is therefore dominated by optimization
dynamics and architectural biases rather than by the a priori complexity
bound. Finally, the numerical setting respects the compactness assumptions
underlying the theoretical arguments, but long-run learning curves reveal
substantial optimization instabilities, especially for the largest models,
suggesting that controlling the optimization trajectory (for example, via
early stopping, adaptive learning rates, or regularization) is as important as
increasing model size when the goal is to reduce Sobolev error.

\section{Discussion and Future Work}
\label{sec:discussion}
There are several natural directions for extending this work. First, the
complexity estimate $\mathcal{O}(\varepsilon^{-d/s})$ derived here relies on
relatively coarse arguments. A natural next step is to sharpen this bound by
incorporating additional regularity assumptions on $\mathcal{G}$ (for example,
higher-order H\"older continuity or Fr\'echet differentiability) and by
exploiting architecture-specific properties such as the spectral bias of FNOs.
The overarching goal is to narrow the gap between the benchmark exponent $s/d$
and the significantly smaller empirical exponents observed in practice.

Second, the experiments in this paper provide qualitative evidence of
power-law convergence for a single one-dimensional PDE over a limited range of
model sizes. A more systematic empirical study, sweeping over architectures,
training regimes, and PDE families (such as higher-dimensional Burgers,
Navier--Stokes, or elliptic problems), would enable more robust estimation of
empirical exponents and a tighter comparison to theoretical predictions.

Third, while this work focuses on Fourier Neural Operators, the approximation
results apply more generally to neural operators built from finite-dimensional
networks. A careful comparison of architectures such as DeepONet, graph neural
operators, and FNOs under common Sobolev error metrics could help clarify
which design choices most effectively exploit the functional-analytic
structure of the underlying problem.

Finally, it would be valuable to investigate how Sobolev-space regularity
interacts with generalization beyond the training distribution (for instance,
to rougher initial conditions or different viscosity parameters), and to what
extent one can obtain guarantees that couple approximation properties with
optimization stability. The pronounced spikes observed in the long-run
learning curves suggest that understanding the geometry of the loss landscape
in Sobolev norm is an important open problem for neural operator theory, with
direct implications for robust training and deployment.

All code used for the numerical experiments, along with additional plots and
extended runs, is available at
\url{https://github.com/nicolehao34/Operator-Learning-in-Sobolev-Spaces} and
will continue to be updated as this line of work evolves.

\section*{Acknowledgments and Disclosure of Funding}
This work did not receive any specific grant from funding agencies in
the public, commercial, or not-for-profit sectors. The author is deeply
grateful to Professor Yunan Yang (Department of Mathematics, Cornell
University), whose functional analysis course and guidance provided the
mathematical foundation and motivation for this project. The author declares
no known competing financial interests or personal relationships that could
have influenced the work reported in this paper. All numerical experiments are
carried out on synthetically generated data produced by the PDE solvers
described in the manuscript; code and scripts to reproduce the datasets and
experiments are available in the GitHub repository referenced above.

\vskip 0.2in
\bibliographystyle{plainnat}
\bibliography{references}

\begin{thebibliography}{6}
\providecommand{\natexlab}[1]{#1}
\providecommand{\url}[1]{\texttt{#1}}
\expandafter\ifx\csname urlstyle\endcsname\relax
  \providecommand{\doi}[1]{doi: #1}\else
  \providecommand{\doi}{doi: \begingroup \urlstyle{rm}\Url}\fi

\bibitem[Kovachki et~al.(2024)Kovachki, Lanthaler, and
  Mhaskar]{kovachki2024data}
Nikola~B. Kovachki, Samuel Lanthaler, and Hrushikesh Mhaskar.
\newblock Data complexity estimates for operator learning.
\newblock \emph{arXiv preprint arXiv:2405.15992}, 2024.

\bibitem[Lanthaler and Stuart(2026)]{lanthaler2026parametric}
Samuel Lanthaler and Andrew~M. Stuart.
\newblock The parametric complexity of operator learning.
\newblock \emph{IMA Journal of Numerical Analysis}, 46\penalty0 (2):\penalty0
  647--712, 2026.
\newblock \doi{10.1093/imanum/draf028}.

\bibitem[Le and Dik(2024)]{le2024mathematicalanalysisneuraloperator}
Vu-Anh Le and Mehmet Dik.
\newblock A mathematical analysis of neural operator behaviors, 2024.

\bibitem[Li et~al.(2020)Li, Kovachki, Azizzadenesheli, Liu, Bhattacharya,
  Stuart, and Anandkumar]{li2020fourier}
Zongyi Li, Nikola Kovachki, Kamyar Azizzadenesheli, Burigede Liu, Kaushik
  Bhattacharya, Andrew Stuart, and Anima Anandkumar.
\newblock Fourier neural operator for parametric partial differential
  equations.
\newblock \emph{arXiv preprint arXiv:2010.08895}, 2020.

\bibitem[Lu et~al.(2021)Lu, Jin, Pang, Zhang, and Karniadakis]{lu2021learning}
Lu~Lu, Pengzhan Jin, Guofei Pang, Zhongqiang Zhang, and George~Em Karniadakis.
\newblock Learning nonlinear operators via {DeepONet} based on the universal
  approximation theorem of operators.
\newblock \emph{Nature Machine Intelligence}, 3\penalty0 (3):\penalty0
  218--229, 2021.

\bibitem[Yarotsky(2017)]{yarotsky2017error}
Dmitry Yarotsky.
\newblock Error bounds for approximations with deep {ReLU} networks.
\newblock \emph{Neural Networks}, 94:\penalty0 103--114, 2017.
\newblock \doi{10.1016/j.neunet.2017.07.002}.

\end{thebibliography}

\newpage
\appendix
\section{Definitions and Theorems}
\label{app:defs}

\subsection{Lipschitz Domain}
\label{def:lipschitz}
A domain $D \subset \mathbb{R}^d$ is called a \emph{Lipschitz domain} if, near
every point on its boundary, it can be locally represented as the region above
the graph of a Lipschitz continuous function. That is, for every
$x_0 \in \partial D$, there exists a neighborhood $U$ of $x_0$ and a Lipschitz
function $\varphi: \mathbb{R}^{d-1} \to \mathbb{R}$ such that (after a
coordinate change)
\[
D \cap U = \left\{ x = (x', x_d) \in U \mid x_d > \varphi(x') \right\}.
\]

\subsection{Sobolev Space}
\label{def:sobolev}
The Sobolev space $H^s(D)$ consists of functions $f \in L^2(D)$ such that all
weak partial derivatives $\partial^\alpha f \in L^2(D)$ for $|\alpha| \leq s$.
These spaces are Hilbert spaces equipped with the norm
\[
\|f\|_{H^s(D)} := \left( \sum_{|\alpha| \leq s} \int_D
|\partial^\alpha f(x)|^2 \, dx \right)^{1/2}.
\]

\subsection{Weak Derivative}
\label{def:weak}
Let $f \in L^1_{\text{loc}}(D)$, where $D \subset \mathbb{R}^d$ is open. We say
that $g \in L^1_{\text{loc}}(D)$ is the \emph{weak derivative} of $f$ with
respect to $x_i$ if
\[
\int_D f(x) \, \partial_i \varphi(x) \, dx = - \int_D g(x) \, \varphi(x) \, dx
\quad \text{for all } \varphi \in C_c^\infty(D).
\]
In this case we write $\partial_i f = g$ in the weak sense. More generally,
for a multi-index $\alpha \in \mathbb{N}^d$, $f$ has weak derivative
$\partial^\alpha f \in L^1_{\text{loc}}(D)$ if
\[
\int_D f(x) \, \partial^\alpha \varphi(x) \, dx = (-1)^{|\alpha|}
\int_D \partial^\alpha f(x) \, \varphi(x) \, dx
\quad \forall \varphi \in C_c^\infty(D).
\]
Weak derivatives generalize classical derivatives to functions that may not be
differentiable in the usual sense. The space $H^s(D)$ is defined using these
weak derivatives, allowing for the inclusion of solutions to PDEs that are not
classically smooth.

\subsection{Square-Integrable Function}
\label{def:l2}
A function $f: D \to \mathbb{R}$ is called \emph{square-integrable} if
\[
\int_D |f(x)|^2 \, dx < \infty.
\]
The space of such functions is denoted $L^2(D)$, a Hilbert space with inner
product $\langle f, g \rangle = \int_D f(x)g(x) \, dx$.

\section{Fourier Neural Operators}
\label{appendix:fno}
The Fourier Neural Operator (FNO), introduced by \citet{li2020fourier}, is a
deep learning architecture designed to learn mappings between
infinite-dimensional function spaces, especially solution operators of
parametric partial differential equations. Unlike traditional neural networks
that act on finite-dimensional vectors, FNOs learn operators of the form
\[
\mathcal{G}: f(x) \mapsto u(x), \quad f \in \mathcal{X},\ u \in \mathcal{Y},
\]
where $\mathcal{X}, \mathcal{Y}$ are typically subsets of $L^2(D)$ or $H^s(D)$
over a spatial domain $D \subset \mathbb{R}^d$. The central innovation of the
FNO is to parameterize the action of the operator in the \textbf{Fourier
domain}, allowing it to efficiently capture long-range dependencies and smooth
functional structure. FNO layers consist of a Fourier transform to move the
function into frequency space, a learned diagonal multiplier (analogous to a
convolution kernel) acting on each frequency mode, an inverse Fourier
transform to return to the spatial domain, and pointwise nonlinearities with
optional skip connections.

\subsection{Mathematical Structure of an FNO Layer}
Let $v: D \to \mathbb{R}^C$ be a function with $C$ channels. An FNO layer
updates $v$ as
\[
\text{FNO}(v)(x) = \mathcal{F}^{-1}\left( R(\hat{v}) \right)(x) + W(v(x)),
\]
where $\hat{v} = \mathcal{F}(v)$ is the Fourier transform, $R$ is a learned
transformation applied mode-wise (typically a complex-valued linear layer on
each frequency), and $W$ is a learned pointwise linear transformation. The
number of retained modes $k$ is typically truncated, introducing an implicit
low-pass filter that stabilizes training and improves generalization.

\end{document}